\begin{document}

\pagestyle{headings}

\title{\Large{Non-Gaussian Scale Space Filtering with 2$\times$2 Matrix of Linear Filters}}


\author{Toshiro Kubota}
\authorrunning{Kubota}
\institute{Department of Mathematics and Computer Science\\
Susquehanna University\\
Selinsgrove, PA 17840}

\maketitle

\begin{abstract}
Construction of a scale space with a convolution filter has
been studied extensively in the past. It has been proven that
the only convolution kernel that satisfies the scale space
requirements is a Gaussian type. In this paper, we consider a
matrix of convolution filters introduced in
\cite{Kubota:IJCV2009} as a building kernel for a scale space,
and shows that we can construct a non-Gaussian scale space with
a $2\times 2$ matrix of filters. The paper derives sufficient
conditions for the matrix of filters for being a scale space
kernel, and present some numerical demonstrations.
\end{abstract}

\section{Introduction}
Linear scale-space representations have been applied to many
signal and image processing
problems\cite{Witkin:ScaleSpaceA}\cite{Lindeberg:PAMI90}, in
which an optimum amount of smoothing cannot be determined in
advance. The linear scale space smoothing iteratively applies a
linear diffusion operator to the signal until an appropriate
amount of smoothing is introduced.

Recently, we proposed a new approach to signal smoothing
\cite{Kubota:IJCV2009}, which is linear, diffusion-like, but
possesses different frequency characteristics from the linear
diffusion operator; as the number of iteration increases, our
filter develops a sharper cut-off but retains the bandwidth
much longer than the linear diffusion. The filter was designed
from a geometrical perspective and called \textit{Elastic
Quadratic Wire} (EQW). We can consider EQW smoothing as
applications of linear filters (in particular circular
convolution filters) to the signal and its auxiliary extensions
in a computational structure similar to a linear transformation
by a $3\times 3$ matrix where each component is one of the
convolution filters.

Our goal is to understand the frequency characteristics of EQW
and derive general requirements on the filter coefficients to
meet the scale space
criteria\cite{Koenderink:ScaleSpace}\cite{Lindeberg:PAMI90}. It
has been shown that the only convolution kernel that satisfies
the scale space requirements is a Gaussian type for the
continuous time space
\cite{Babaud:ScaleSpace}\cite{Koenderink:ScaleSpace}\cite{Yuille:ScaleSpace}
and the modified Bessel functions of integer order for the
discrete time space. The latter approaches the Gaussian kernel
as the length of the filter increases. In this paper, instead
of considering a convolution filter, we consider a matrix of
convolution filters employed for the EQW smoothing. In
particular, instead of $3\times 3$ matrix of filters as in the
original EQW, we study a $2\times 2$ form. Although smaller in
size, the configuration retains some of intrinsic
characteristics of the original EQW and allows us to
characterize the frequency response algebraically. We will
extend the results in the future for larger and more general
configurations.

\section{Background}
We consider the following linear system.
\begin{equation}\label{eq:EQW}
\left(\begin{array}{c}
        \mathbf{x}_1^{[l+1]} \\
        \mathbf{x}_2^{[l+1]} \\
        \vdots \\
        \mathbf{x}_P^{[l+1]}
      \end{array}
\right)=(1-t)\left(\begin{array}{c}
        \mathbf{x}_1^{[l]} \\
        \mathbf{x}_2^{[l]} \\
        \vdots \\
        \mathbf{x}_P^{[l]}
                   \end{array}
\right) +
t\left(
        \begin{array}{c}
        \sum_{s=1}^P \mathbf{f}_{1s} \ast \mathbf{x}_s^{[l]}\\
        \sum_{s=1}^P \mathbf{f}_{2s} \ast \mathbf{x}_s^{[l]}\\
          \vdots\\
        \sum_{s=1}^P \mathbf{f}_{Ps} \ast \mathbf{x}_s^{[l]}\\
        \end{array}
      \right)
\end{equation}
where $P$ is the order of the system, $\mathbf{x}_s^{[\cdot]}$
($1\le s\le P$) are discrete signals of length $N$ in which the
number inside $[~]$ indicates the iteration number, $t\ge 0$ is
a scale parameter, $\mathbf{f}_{rs}$ ($1\le r,s\le P$) are
linear filters, and $\ast$ is a circular convolution operator.
The operator is applied iteratively, and (\ref{eq:EQW}) implies
that the outputs of $l$th iteration becomes the inputs to the
$l+1$st iteration. We call this computational structure a
\textit{matrix of filters}, as the filters can be arranged in a
$P\times P$ matrix form and the operation can be conveniently
viewed as a \textit{multiplication} (defined as in
(\ref{eq:EQW})) of the matrix with the input signals. Figure
\ref{fig:Schematic} shows two stages of a $P\times P$ matrix of
filters. Note that the computation at each stage is identical.

\begin{figure}
\centering
\includegraphics[width=5in]{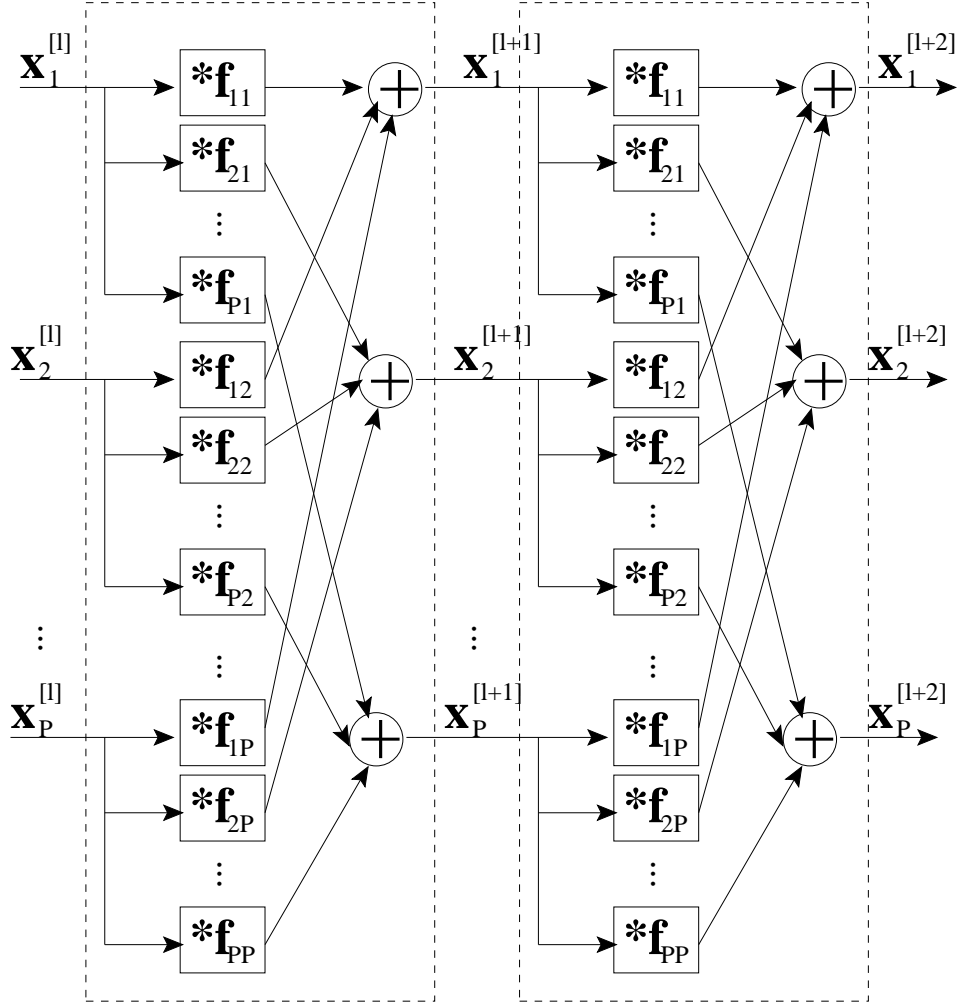}
\caption{A schematic of a $P\times P$ matrix of linear filters. A rectangle with
$*\mathbf{f}_{rs}$ is a linear filter whose impulse response is $\mathbf{f}_{rs}$. A circle
with $+$ receives $P$ inputs and adds them at every cycle.}
\label{fig:Schematic}
\end{figure}

\subsection{Equivalent filter} Let $\mathbf{M}_{rs}$ be a $N$
by $N$ circulant matrix that implements the linear filter of
$\mathbf{f}_{rs}$. Let
\begin{equation}
\mathbf{M}=
\left(
        \begin{array}{ccc}
          \mathbf{M}_{11} & \hdots & \mathbf{M}_{1P} \\
          \vdots & \ddots & \vdots\\
          \mathbf{M}_{P1} & \hdots & \mathbf{M}_{PP} \\
        \end{array}
      \right).
\end{equation}
Write the $l$th power of $\mathbf{M}$ as
\begin{equation}
\mathbf{M}^l=
\left(
        \begin{array}{ccc}
          \mathbf{M}_{11}^{[l]} & \hdots & \mathbf{M}_{1P}^{[l]} \\
          \vdots & \ddots & \vdots\\
          \mathbf{M}_{P1}^{[l]} & \hdots & \mathbf{M}_{PP}^{[l]} \\
        \end{array}
      \right).
\end{equation}
Then, the signal at $l$th iteration can be expressed in terms
of the initial signals
($\mathbf{x}_1^{[0]}\hdots\mathbf{x}_P^{[0]}$) by
\begin{equation}\label{eq:EQW2b}
\left(\begin{array}{c}
        \mathbf{x}_1^{[l]} \\
        \vdots \\
        \mathbf{x}_P^{[l]} \\
      \end{array}
\right)=
\left(
        \begin{array}{ccc}
          \mathbf{M}_{11}^{[l]} & \hdots & \mathbf{M}_{1P}^{[l]} \\
          \vdots & \ddots & \vdots\\
          \mathbf{M}_{P1}^{[l]} & \hdots & \mathbf{M}_{PP}^{[l]} \\
        \end{array}
      \right)
\left(\begin{array}{c}
        \mathbf{x}_1^{[0]} \\
        \vdots \\
        \mathbf{x}_P^{[0]} \\
      \end{array}
\right).
\end{equation}

Now we designate $\mathbf{x}_1$ as the primary signal and
$\mathbf{x}_s$ ($s>1$) as auxiliary ones. Initially, all
auxiliary signals are set to zero. Then, $\mathbf{x}_1^{[0]}$
and $\mathbf{M}_{11}^{[l]}$ determine $\mathbf{x}_1^{[l]}$, the
primary signal at the $l$th iteration. $\mathbf{M}_{11}^{[l]}$
is also circulant as circulant matrices are closed under
addition and multiplication. Therefore, it implements circular
convolution of a filter denoted as $\mathbf{f}_{11}^{[l]}$,
which we call an \textit{equivalent filter} at $l$th iteration.
The equivalent filter transforms $\mathbf{x}_1^{[0]}$ to
$\mathbf{x}_1^{[l]}$.

\subsection{Eigen-decomposition}
Define
\begin{equation}
\mathbf{B}_i=\left(\begin{array}{ccc}
                     \mathbf{f}_{11}(i) & \hdots & \mathbf{f}_{1P}(i) \\
                     \vdots & \ddots & \vdots \\
                     \mathbf{f}_{P1}(i) & \hdots & \mathbf{f}_{PP}(i)
                   \end{array}
                   \right)
\end{equation}
where $\mathbf{f}_{rs}(i)$ denotes the $i$th coefficient of
$\mathbf{f}_{rs}$ and
\begin{equation}
\mathbf{H}(\rho^k)=\sum_{i=0}^{N-1}\mathbf{B}_i \rho^{ik}
\end{equation}
where $\rho=e^{j2\pi/N}$. Then, the eigenvalues of
$\mathbf{H}\left(\rho^k\right)$ ($0\le k\le N-1$) are
eigenvalues of $\mathbf{M}$. For each $k$, there are P
eigenvalues. Thus, there are total of $PN$ eigenvalues for
$\mathbf{M}$ with possible repetition. Let $\lambda_i^k$
($i\in\left\{1,\hdots P\right\}$) be an eigenvalue of
$\mathbf{H}(\rho^k)$ and $\mathbf{v}_i^k$ be the corresponding
eigenvector. Let
$\mathbf{r}^k=\left[\rho^0~\rho^k\cdots\rho^{k(N-1)}\right]^T$
(with the superscript $T$ denotes transposition). Then
$\mathbf{v}_i^k\bigotimes \mathbf{r}(\rho^k)$ where
$\bigotimes$ denotes the Kronecker product is an eigenvector of
$\mathbf{M}$.

Let
$\mathbf{V}^k=\left[\mathbf{v}_1^k\hdots\mathbf{v}_P^k\right]$,
$\mathbf{g}^k$ be the first row of $\mathbf{V}^k$, and
$\tilde{\mathbf{g}}^k$ be the first column of
$\left(\mathbf{V}^k\right)^{-1}$. Thus,
\begin{equation}\label{eq:mixing}
\mathbf{g}^k\cdot\tilde{\mathbf{g}}^k=1.
\end{equation}

Let $\mathbf{D}_i$ ($i\in\left\{1\hdots P\right\}$) be an $N$
by $N$ diagonal matrix where $k$th diagonal component is
$\lambda_i^k$ and $\mathbf{G}_i$ be another $N$ by $N$ diagonal
matrix where $k$th diagonal component is
$\mu_i^k=\mathbf{g}^k(i)\tilde{\mathbf{g}}^k(i)$. Note that
$\mathbf{g}^k(i)$ denotes the $i$th component of
$\mathbf{g}^k$. We call $\mu_i^k$ \textit{mixing coefficients},
and
\begin{equation}
\sum_i\mu_i^k=1,
\end{equation}
for all $0\le k\le N-1$ due to (\ref{eq:mixing}).

Then, $\mathbf{M}_{11}^{[l]}$ can be decomposed by
\begin{equation}
\mathbf{M}_{11}^{[l]}=\mathbf{W}\sum_{i=1}^P\mathbf{D}_i^l\mathbf{G}_i\mathbf{W}^{T}
\end{equation}
where $\mathbf{W}$ is the $N$ by $N$ discrete Fourier
transformation matrix and $W_{rs}=\rho^{rs}=e^{j2\pi rs/N}$.
Hence, $\sum_{i=1}^P\mathbf{D}_i^l\mathbf{G}_i$ gives the
frequency response of the equivalent filter at $l$th iteration.

\subsection{Scale-Space Filters} We are interested in
incremental smoothing of signals with small size filters. In
literature, the approach is often referred to as \textit{scale
space filtering} and plays an important role in many signal and
image processing applications. To maximize the control of the
smoothing, we limit the number of non-zero filter coefficients
in $\mathbf{f}_{rs}$ to three, the smallest symmetric filter
size that allows construction of scale space. Thus, we assume
$\mathbf{B}_i=\mathbf{0}$ for $i\notin\left\{-1, 0,1\right\}$.
In this paper, we consider requirements on $\mathbf{f}_{rs}$
for a $2\times 2$ matrix of filters so that its equivalent
filter satisfies conditions for a scale space filter.

We impose the following conditions on equivalent filters.
\begin{enumerate}
\item Real frequency response: $\mathbf{f}_{11}^{[l]}$ is
    symmetric. In other words, the Fourier transform of the
    equivalent filter is real.
\item Positive response: Its frequency response is
    non-negative at every frequency component.
\item Unimodal response: Its frequency response is unimodal
    with the peak at the frequency 0.
\item Consistent reduction response: Each frequency
    component is non-increasing with respect to the
    iteration number.
\item Normalized response: The DC component of the Fourier
    transform is 1.
\item Equivalence to linear diffusion: It can be reduced to
    a common linear diffusion operator when the auxiliary
    signals are fixed at zero.
\end{enumerate}
The first requirement prevents any phase distortion after the
smoothing. The second through fourth requirements prevent any
new local minimum or local maximum from forming as the result
of smoothing, and are considered essential for scale space
representations. Note that the fourth requirement is trivially
satisfied for a conventional scale space filtering when the
second requirement is satisfied. That is not the case for the
matrix of filter based construction. The fifth requirement
preserves the mean value of the signal. The equivalent to
linear diffusion requirement states that the system without
contributions from the auxiliary signal will result in linear
diffusion. An iteration formula for the linear diffusion is
\cite{Lindeberg:PAMI90}
\begin{equation}\label{eq:LinearDiffusion}
\mathbf{x}_1^{k+1}=\left(1-2t\right)\mathbf{x}_1^k+
t\left(z\mathbf{x}_1^k+z^{-1}\mathbf{x}_1^k\right).
\end{equation}
where $z$ and $z^{-1}$ shift signals by one to the left and
right, respectively, and $t\in\left[0, 1/4\right]$.

\section{Formulation}\label{formulation}
We introduce notations specific to the case of $2\times 2$
matrix of filters. To reduce the amount of arabic subscripts,
we use $\mathbf{x}$ instead of $\mathbf{x}_1$ as the primary
signal and $\mathbf{a}$ for the sole auxiliary signal, and
write the processing of the $2\times 2$ matrix of filters as
\begin{equation}\label{eq:EQW2by2}
\left(\begin{array}{c}
        \mathbf{x}^{[l+1]} \\
        \mathbf{a}^{[l+1]}
      \end{array}
\right)=\left(\begin{array}{c}
        \mathbf{x}^{[l]} \\
        \mathbf{a}^{[l]}
                   \end{array}
\right) +
t\left(
        \begin{array}{c}
        \mathbf{f}_{xx} \ast \mathbf{x}^{[l]} + \mathbf{f}_{xa} \ast \mathbf{a}^{[l]}\\
        \mathbf{f}_{xa} \ast \mathbf{x}^{[l]} + \mathbf{f}_{aa} \ast \mathbf{a}^{[l]}\\
        \end{array}
      \right).
\end{equation}
Each convolution filters have at most three non-zero
coefficients. Thus, we write them
$\mathbf{f}_{\ast\ast}=[\alpha_{\ast\ast}~
\beta_{\ast\ast}~\gamma_{\ast\ast}]$ with one tap delay so that
$\beta_{\ast\ast}$ is the center of the filter. (Replace
$\ast\ast$ with $xx$, $xa$, $ax$, or $aa$.)

Note that $k\in [0, N-1]$ where $N$ is the length of the input
signal. Thus, as $N\rightarrow\infty$, $\rho^k$ covers all
roots of unity. Since we want to derive design requirements for
any signal length, we treat all quantities as functions of
$\rho=\left\{z\in\mathbb{C}| |z|=1\right\}$ or equivalently
$\theta=\angle\rho\in [-\pi, \pi]$. This allows us to
generalize our discussion and eliminate the superscript $k$
from expressions.

With these notations, eigenvalues of $\mathbf{B}$ are
\begin{align}\label{eq:eigenvalues}
\lambda_1(\rho)=\frac{\sigma_{xx}(\rho)+\sigma_{aa}(\rho)-\sqrt{\Delta(\rho)}}{2}\\
\lambda_2(\rho)=\frac{\sigma_{xx}(\rho)+\sigma_{aa}(\rho)+\sqrt{\Delta(\rho)}}{2},\\
\end{align}
and mixing coefficients are
\begin{align}
\mu_1(\rho)=-\frac{\sigma_{xx}(\rho)-\sigma_{aa}(\rho)}{2\sqrt{\Delta(\rho)}}+\frac{1}{2}\label{eq:mixingcoeff1}\\
\mu_2(\rho)=\frac{\sigma_{xx}(\rho)-\sigma_{aa}(\rho)}{2\sqrt{\Delta(\rho)}}+\frac{1}{2}\label{eq:mixingcoeff2}
\end{align}
where
\begin{align}
\sigma_{\ast\ast}(\rho)=\delta_{\ast\ast}(\rho)+t\left(\alpha_{\ast\ast}\rho^{-1}+\beta_{\ast\ast} + \gamma_{\ast\ast}\rho\right)\\
\Delta(\rho) = \left(\sigma_{xx}(\rho)-\sigma_{aa}(\rho)\right)^2+4\sigma_{ax}(\rho)\sigma_{xa}(\rho).
\end{align}
with $\delta_{rs}$ being the Kronecker's delta function (1 when
$r=s$ and 0 otherwise). Note that in (\ref{eq:mixingcoeff1})
and (\ref{eq:mixingcoeff2}), we are assuming that the
eigenvalues are distinct (or $\Delta\ne 0$). When
$\lambda_1=\lambda_2$, the mixing coefficients are arbitrary,
and we set $\mu_1=\mu_2=1/2$.

Note that the above expressions are all functions of $\rho$ (or
equivalently $\theta$). However, for brevity, we omit the
variable in their expressions unless we are evaluating them at
a particular $\rho$.

The frequency response of the equivalent filter at $l$th
iteration is
\begin{align}\label{eq:FreqResp}
F^l=D_1^lG_1+D_2^lG_2=\notag\\
\mu_1
\left(\frac{\sigma_{xx}+\sigma_{aa}-\sqrt{\Delta}}{2}\right)^l +
\mu_2
\left(\frac{\sigma_{xx}+\sigma_{aa}+\sqrt{\Delta}}{2}\right)^l.
\end{align}

\section{Filter Requirements}
In this section, we derive a sufficient condition for a
$2\times 2$ matrix of filters to satisfy the scale space
requirements. We first use necessary conditions to simplify the
formulae of the individual filters. We then use the simplified
formulae to derive sufficient conditions on the design
parameters.

\begin{theorem}\label{theorem1}
Necessary conditions for a $2\times 2$ matrix of filters being
a scale space filter are
\begin{align}
\sigma_{xx}=\left(1-2t\right)+2t\cos\theta\label{eq:sxx}\\
\sigma_{aa}=\left(1-2bt\right)+2tc\cos\theta\label{eq:saa}\\
\sigma_{xa}\sigma_{ax}=-4t^2d\sin^2\theta\label{eq:sxaax}
\end{align}
where $b,c,d\in \mathbb{R}$.
\end{theorem}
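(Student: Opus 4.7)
The plan is to apply the six scale-space conditions in turn, peeling off the three claimed formulae. First I would use the equivalence to linear diffusion (condition 6). Setting $\mathbf{a}^{[l]}\equiv 0$ in (\ref{eq:EQW2by2}) reduces the primary iteration to $\mathbf{x}^{[l+1]} = \mathbf{x}^{[l]} + t\mathbf{f}_{xx}*\mathbf{x}^{[l]}$, and comparing this term-by-term against (\ref{eq:LinearDiffusion}) forces $\alpha_{xx}=\gamma_{xx}=1$ and $\beta_{xx}=-2$. Substituting into the definition of $\sigma_{xx}(\rho)$ and using $\rho+\rho^{-1}=2\cos\theta$ reproduces (\ref{eq:sxx}).

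Next I would exploit the real-response condition. The spectrum $F^l$ given in (\ref{eq:FreqResp}) satisfies the two-term recurrence $F^{l+1}=(\sigma_{xx}+\sigma_{aa})F^l-(\sigma_{xx}\sigma_{aa}-\sigma_{xa}\sigma_{ax})F^{l-1}$ with $F^0=1$ and $F^1=\sigma_{xx}$, both of which follow from $\mu_1+\mu_2=1$ and a short calculation using (\ref{eq:mixingcoeff1})--(\ref{eq:mixingcoeff2}). Since $F^l$ is required real for every $l$ and $\sigma_{xx}$ is already real, inspecting the recurrence at $l=1$ gives $F^2=\sigma_{xx}^2+\sigma_{xa}\sigma_{ax}$, which forces $\sigma_{xa}\sigma_{ax}$ to be real; inspecting at $l=2$ then forces $\sigma_{aa}$ real as well. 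Realness of $\sigma_{aa}$ is equivalent to $\alpha_{aa}=\gamma_{aa}$; renaming this common value $c$ and writing $\beta_{aa}=-2b$ with $b\in\mathbb{R}$ delivers (\ref{eq:saa}).

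Finally I would combine the normalized-DC condition with the realness of $\sigma_{xa}\sigma_{ax}$ established above. The requirement $F^l(1)=1$ for all $l$, together with $\sigma_{xx}(1)=1$, forces $\sigma_{xa}(1)\sigma_{ax}(1)=0$, as one sees either from the identity $\sum_l F^l(1)s^l=1/(1-s)$ or by a direct eigen-analysis of $\mathbf{H}(1)$. Realness pins $\sigma_{xa}(\rho)\sigma_{ax}(\rho)/t^2$ to the form $p_0+2p_1\cos\theta+2p_2\cos 2\theta$ with $p_0,p_1,p_2\in\mathbb{R}$, and the DC-vanishing yields the relation $p_0+2p_1+2p_2=0$. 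Matching the stated form $-4t^2 d\sin^2\theta=-2t^2 d+2t^2 d\cos 2\theta$ then demands the extra identity $p_1=0$ with $d=p_2$.

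The main obstacle is precisely this extra identity $p_1=0$: realness plus the single DC constraint leaves a one-parameter family of products outside the claimed shape, so a further structural ingredient is needed. The natural one is the antisymmetric structure of the off-diagonal filters implicit in (\ref{eq:EQW2by2}), where both off-diagonal slots of the filter matrix carry $\mathbf{f}_{xa}$, making $\sigma_{xa}\sigma_{ax}=\sigma_{xa}^2$ a perfect square; the realness and DC-vanishing constraints then force $\sigma_{xa}$ to be purely imaginary, i.e., $\alpha_{xa}=-\gamma_{xa}$ and $\beta_{xa}=0$, which kills the $\cos\theta$ term in the product and produces (\ref{eq:sxaax}) with $d=\alpha_{xa}\alpha_{ax}\in\mathbb{R}$.
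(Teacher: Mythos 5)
Your derivations of (\ref{eq:sxx}) and (\ref{eq:saa}) are sound and essentially the paper's: the paper simply computes $F^1=\sigma_{xx}$, $F^2=\sigma_{xx}^2+\sigma_{xa}\sigma_{ax}$, $F^3=\sigma_{xx}^3+2\sigma_{xx}\sigma_{xa}\sigma_{ax}+\sigma_{aa}\sigma_{xa}\sigma_{ax}$ directly, and your two-term recurrence reproduces exactly these expressions, with the same (shared) implicit assumption that $\sigma_{xa}\sigma_{ax}\not\equiv 0$ when extracting realness of $\sigma_{aa}$. You also correctly diagnose that realness of the product plus the single DC constraint $p_0+2p_1+2p_2=0$ does not pin down (\ref{eq:sxaax}); an additional constraint ($p_1=0$) is genuinely needed.

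The gap is in how you supply that missing constraint. Your structural claim that both off-diagonal slots carry the same filter rests on what is evidently a typo in (\ref{eq:EQW2by2}): everywhere else the paper treats $\mathbf{f}_{xa}$ and $\mathbf{f}_{ax}$ as independent filters with separate coefficients $\alpha_{xa},\alpha_{ax}$ (the implementation in Section \ref{experiments} even uses $\mathbf{f}_{ax}=[1,0,-1]$, $\mathbf{f}_{xa}=[-1/2,0,1/2]$), and your own conclusion $d=\alpha_{xa}\alpha_{ax}$ presupposes two distinct filters. Worse, even granting $\sigma_{ax}=\sigma_{xa}$, your assertion that realness plus DC-vanishing force $\sigma_{xa}$ to be purely imaginary is false: the symmetric choice $\mathbf{f}_{xa}=[\alpha,\,-2\alpha,\,\alpha]$ gives $\sigma_{xa}=2t\alpha(\cos\theta-1)$, which is real and vanishes at $\theta=0$, yet $\sigma_{xa}^2=4t^2\alpha^2(1-\cos\theta)^2$ is not of the form $-4t^2 d\sin^2\theta$; this satisfies every constraint you impose and violates (\ref{eq:sxaax}). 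The ingredient the paper actually uses—and which your argument needs—is the positivity and consistent-reduction requirement evaluated at $t=1/4$, $\theta=\pi$: there $\sigma_{xx}(\pi)=0$, so $0\le F^2(\pi)\le F^1(\pi)=0$ forces $\sigma_{xa}(\pi)\sigma_{ax}(\pi)=0$, i.e.\ the paper's condition (\ref{eq:xaax_CRPOS}). In your parametrization this second endpoint condition reads $p_0-2p_1+2p_2=0$, which combined with the DC condition yields exactly $p_1=0$ and $p_0=-2p_2$, hence the $\sin^2\theta$ form (and it also rules out the symmetric counterexample above). Replacing your "structural" step with this endpoint argument closes the proof.
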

\begin{proof}
$F^l$ at the first three iterations ($l=1,2,3$) are
\begin{equation}
F^1=\sigma_{xx},
\end{equation}
\begin{equation}
F^2=\left(\sigma_{xx}\right)^2+\sigma_{xa}\sigma_{ax},
\end{equation}
\begin{equation}
F^3=\left(\sigma_{xx}\right)^3+2\sigma_{xx}\sigma_{xa}\sigma_{ax}+
\sigma_{aa}\sigma_{xa}\sigma_{ax}.
\end{equation}
Since they have to be real according to the real frequency
response requirement, $\sigma_{xx}$, $\sigma_{ax}\sigma_{xa}$
and $\sigma_{aa}$ have to be real. $\sigma_{xx}$ is real for
any $k$ if and only if $\alpha_{xx}=\gamma_{xx}$ or
$\mathbf{f}_{xx}$ is symmetric. $\sigma_{aa}$ is real for any
$k$ if and only if $\alpha_{aa}=\gamma_{aa}$ or
$\mathbf{f}_{aa}$ is symmetric. $\sigma_{ax}\sigma_{xa}$ is
real for any $k$ if and only if either both $\sigma_{ax}$ and
$\sigma_{xa}$ are real or both are imaginary. Thus,
$\mathbf{f}_{ax}$ and $\mathbf{f}_{xa}$ are either both
symmetric or anti-symmetric. Below, we show that either way, we
will get the same condition for $\sigma_{ax}\sigma_{xa}$. Now,
let's introduce constraints derived from other requirements.

Due to the equivalence to linear diffusion requirement, we set
$\alpha_{xx}=\gamma_{xx}=1$ and $\beta_{xx}=-2$, which gives
\begin{equation}\label{eq:xx_LD}
\sigma_{xx}=(1-2t)+2t\cos\theta.
\end{equation}

For the normalized response requirement, the frequency response
at $\theta=0$ has to be 1 for all $l$. Thus,
$F^1(\theta=0)=F^2(\theta=0)=1$ has to be true. This gives
$\sigma_{xx}(\theta=0)=1$ and
$\sigma_{xa}(\theta=0)\sigma_{ax}(\theta=0)=0$. The former is
satisfied with (\ref{eq:xx_LD}). The latter gives
\begin{equation}
\left(\alpha_{xa}+\beta_{xa}+\gamma_{xa}\right)\left(\alpha_{ax}+\beta_{ax}+\gamma_{ax}\right)=0.\label{eq:xaax_NR}
\end{equation}

Due to the constant reduction and the positivity requirements,
$0\le F^2(\theta)\le F^1(\theta)$ or
$0\le\sigma_{xx}^2+\sigma_{xa}\sigma_{ax}\le\sigma_{xx}$ for
any $t\in[0,~1/4]$. Consider $t=1/4$. Then,
$\sigma_{xx}(\theta=\pi)=0$. Hence, we have
\begin{equation}\label{eq:xaax_CRPOS}
\sigma_{xa}(\pi)\sigma_{ax}(\pi)=\left(\beta_{xa}-\alpha_{xa}-\gamma_{xa}\right)
\left(\beta_{ax}-\alpha_{ax}-\gamma_{ax}\right)=0.
\end{equation}

Now, we investigate if $\mathbf{f}_{xa}$ and $\mathbf{f}_{ax}$
should be both symmetric or anti-symmetric. First, Assume both
are symmetric. Thus, $\gamma_{xa}=\alpha_{xa}$ and
$\gamma_{ax}=\alpha_{ax}$. Because of the symmetry of
(\ref{eq:xaax_NR}) and (\ref{eq:xaax_CRPOS}), we consider
$\left(\alpha_{ax}+\beta_{ax}+\gamma_{ax}\right)=0$ and
$\left(\beta_{xa}-\alpha_{xa}-\gamma_{xa}\right)=0$, and the
derivation for the other case will be the same. We have
$\alpha_{ax}=\gamma_{ax}=-\beta_{ax}/2$ and
$\alpha_{xa}=\gamma_{xa}=\beta_{xa}/2$, and obtain
\begin{equation}
\sigma_{xa}\sigma_{ax}=t^2\beta_{xa}\beta_{ax}\left(1-\cos^2\theta\right)=t^2\beta_{xa}\beta_{ax}\sin^2\theta.\\
\end{equation}

Next, assume that both $\mathbf{f}_{xa}$ and $\mathbf{f}_{ax}$
are anti-symmetric. Thus, $\gamma_{xa}=-\alpha_{xa}$,
$\gamma_{ax}=-\alpha_{ax}$, and $\beta_{xa}=\beta_{ax}=0$. In
this case, both (\ref{eq:xaax_NR}) and (\ref{eq:xaax_CRPOS})
are satisfied. Furthermore,
\begin{equation}
\sigma_{xa}\sigma_{ax}=-4t^2\alpha_{xa}\alpha_{ax}\sin^2\theta.\\
\end{equation}

Therefore, given symmetric version of $\mathbf{f}_{xa}$ and
$\mathbf{f}_{ax}$, we can always find anti-symmetric
counterpart that provides the same expression for
$\sigma_{xa}\sigma_{ax}$, and vice verse.

Let $\beta_{aa}=b$ and $\alpha_{aa}=\gamma_{aa}=c$. Then, we
have $\sigma_{aa}=\left(1-2bt\right)+2ct\cos\theta$. Let both
$\mathbf{f}_{xa}$ and $\mathbf{f}_{ax}$ are anti-symmetric, and
$\alpha_{ax}\alpha_{xa}=d$. Then, we obtain the necessary
condition. $\Box$
\end{proof}

We have found that all scale space filters can be expressed by
(\ref{eq:sxx}), (\ref{eq:saa}), and (\ref{eq:sxaax}). There are
three design parameters: $b$, $c$, and $d$. The next theorem
provides requirements on these parameters.

\begin{theorem}\label{theorem2}
Sufficient conditions for a $2\times 2$ matrix of filters being
a scale space filter are that they are in the forms of
(\ref{eq:sxx})-(\ref{eq:sxaax}) and
\begin{align}
0\leq b+c\leq 2,\\
-2d\leq b-c\geq -2d,\\
c(b-c)\geq 2d,\\
c^2-(2-b)c+2d\leq 0
\end{align}
\end{theorem}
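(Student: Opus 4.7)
The six scale-space requirements are realness, non-negativity, unimodality, iteration-monotonicity, normalization, and reducibility to linear diffusion. Theorem~\ref{theorem1} already pins down the forms of $\sigma_{xx}$, $\sigma_{aa}$, and $\sigma_{xa}\sigma_{ax}$ so that realness, the DC normalization at $l=1$, and linear-diffusion-equivalence are automatic. The remaining work is to verify non-negativity, unimodality, and iteration-monotonicity of the frequency response $F^l$ in (\ref{eq:FreqResp}) for every $\theta\in[-\pi,\pi]$, every iteration $l\ge 1$, and every step size $t\in[0,1/4]$.

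The plan is to show that under the stated hypotheses the eigenvalues $\lambda_1,\lambda_2$ of $\mathbf{H}(\rho)$ are real and lie in $[0,1]$, while the mixing coefficients $\mu_1,\mu_2$ lie in $[0,1]$ (they already sum to $1$). Once this is established, $F^l$ is a convex combination of elements of $[0,1]$, so $F^l\in[0,1]$ gives positivity, and $\lambda_i^{l+1}\le\lambda_i^l$ gives $F^{l+1}\le F^l$, which is iteration-monotonicity. The normalization $F^l(0)=1$ drops out because $\sigma_{xa}\sigma_{ax}$ vanishes at $\theta=0$, so $\mathbf{H}(1)$ is diagonal with one eigenvalue equal to $1$ carrying all of the mixing weight.

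Concretely, with the substitution $u=\cos\theta$, the requirements ``$\lambda_i\in[0,1]$ and $\mu_i\ge 0$'' translate into: (a)~$\Delta\ge 0$; (b)~$\sigma_{xa}\sigma_{ax}\ge 0$; (c)~trace bounds $0\le\sigma_{xx}+\sigma_{aa}\le 2$; (d)~determinant positivity $\sigma_{xx}\sigma_{aa}\ge\sigma_{xa}\sigma_{ax}$; and (e)~the complementary bound $(1-\sigma_{xx})(1-\sigma_{aa})\ge\sigma_{xa}\sigma_{ax}$ that controls $\lambda_2\le 1$. After substituting (\ref{eq:sxx})--(\ref{eq:sxaax}), each inequality becomes a polynomial in $(u,t)\in[-1,1]\times[0,1/4]$. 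The trace bounds are linear in $u$ and reduce by endpoint evaluation at the worst step size $t=1/4$ to $0\le b+c\le 2$. The complementary bound (e) factors out a $(1-u)$, leaves an affine function of $u$, and yields the endpoint conditions $b+c\ge 0$ and $b-c\ge -2d$. The lower determinant bound (d) is the delicate one: it is quadratic in both $u$ and $t$, and optimizing over the box $[-1,1]\times[0,1/4]$ is expected to produce exactly the remaining two conditions $c(b-c)\ge 2d$ and $c^2-(2-b)c+2d\le 0$.

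The unimodality requirement follows once the other conditions are in place: $F^l$ is an even, $2\pi$-periodic function of $\theta$ that depends on $\theta$ only through $u=\cos\theta$, so it suffices to verify monotonicity of $F^l(u)$ on $[-1,1]$, which follows from the monotonicity of $\lambda_1(u)$ and $\lambda_2(u)$ combined with the non-negativity of $\mu_i(u)$ already established. The principal obstacle will be the analysis of the quadratic-in-$t$ determinant positivity condition; I expect to factor one power of $(1-u)$ from both sides and then minimize the remaining quadratic in $t$ at the worst case $t=1/4$, showing that the minimum over $u\in[-1,1]$ reproduces exactly the pair of inequalities $c(b-c)\ge 2d$ and $c^2-(2-b)c+2d\le 0$.
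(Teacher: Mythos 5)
Your reduction of positivity, consistent reduction, and normalization to ``$\lambda_1,\lambda_2\in[0,1]$ and $\mu_1,\mu_2\ge 0$'' is essentially the paper's own route: the paper writes $F^l$ as in (\ref{eq:FreqRespB}) with $\phi=\sigma_{xx}+\sigma_{aa}$, $\psi=\sigma_{xx}-\sigma_{aa}$, gets $\mu_1,\mu_2\ge 0$ from $d\le 0$ (since $\Delta=\psi^2-16t^2d\sin^2\theta\ge\psi^2$), gets $\lambda_2\le 1$ from the factorization $(2-\phi)^2-\Delta=16t^2\bigl((c-d)w-(b+d)\bigr)(w-1)$ giving $b+c\ge 0$ and $b-c\ge -2d$, and handles $\lambda_1\ge 0$ (your condition (d), $\sigma_{xx}\sigma_{aa}\ge\sigma_{xa}\sigma_{ax}$) via a careful quadratic-in-$t$ analysis of $\eta(t,w)$. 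Up to that point your plan is sound.

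The genuine gap is in the last two steps. First, you expect the determinant inequality (d) to ``produce exactly'' the conditions $c(b-c)\ge 2d$ and $c^2-(2-b)c+2d\le 0$; in fact the boundary evaluations of (d) at $t=1/4$, $w=\pm1$ only yield $b-c\le 2$ and $b+c\le 2$, and the paper then \emph{verifies} $\eta(t,w)\ge 0$ from the linear conditions already collected — the two quadratic conditions in $c$ do not come from there. Second, and more seriously, your unimodality argument does not work as stated: you claim monotonicity of $F^l(u)=\mu_1(u)\lambda_1(u)^l+\mu_2(u)\lambda_2(u)^l$ follows from monotone $\lambda_i(u)$ and nonnegative $\mu_i(u)$, but the mixing coefficients themselves depend on $u$, so $\partial F^l/\partial u$ contains the term $\mu_2'(u)\bigl(\lambda_2^l-\lambda_1^l\bigr)$ whose sign is not controlled by positivity; moreover $\lambda_{1,2}=(\phi\mp\sqrt{\Delta})/2$ need not be individually monotone in $u$, because $\Delta$ contains the non-monotone term $-16t^2d(1-u^2)$ (indeed the non-monotone shaping of $\lambda_2$ near $\theta=0$ is exactly the paper's mechanism for a non-Gaussian scale space). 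Unimodality is where the real work — and the two missing inequalities — lives: the paper differentiates $F^l$, writes $\partial F^l/\partial\theta=-\xi(\cos\theta,t)\sin\theta$, and reduces $\xi\ge 0$ to $2\Delta\xi_\psi-\psi\xi_\Delta\ge 0$, $\xi_\phi\ge 0$ (forcing $c\ge -1$), and $4\Delta\xi_\phi^2-\xi_\Delta^2\ge 0$; evaluating the latter quadratic $\vartheta(w)$ at $w=\pm1$ is precisely what generates $bc-c^2-2d\ge 0$ (i.e.\ $c(b-c)\ge 2d$), $b+c\le 2+2d$, and $bc+2d+c^2-2c\le 0$ (i.e.\ $c^2-(2-b)c+2d\le 0$), after which a vertex analysis of $\vartheta$ shows these endpoint conditions suffice on all of $-1\le w\le 1$. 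Without an argument of this kind your proposal neither derives the stated parameter conditions nor establishes the unimodality requirement.
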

\begin{proof}
We first derive constraints on the three parameters for each
scale space requirement.

\textbf{Real frequency response requirement:}

Since
$\sigma_{xx}$, $\sigma_{aa}$ and $\sigma_{xa}\sigma_{ax}$ are
all real and consequently $\Delta$, $\mu_1$, $\mu_2$,
$\lambda_1$, and $\lambda_2$ are all real, the real frequency
response is satisfied.

\textbf{Equivalence to linear diffusion requirement:}
The equivalence to linear diffusion requirement is satisfied
with (\ref{eq:sxx}).

\textbf{Normalized response requirement:}
At $\theta=0$, $\sigma_{xa}\sigma_{ax}=0$ and
$\Delta=\left(\sigma_{xx}^0-\sigma_{aa}^0\right)^2$. If
$\sigma_{xx}^0\geq \sigma_{aa}^0$, $\lambda_1=0$,
$\lambda_2=1$, $\mu_1=0$, and $\mu_2=1$. Thus,
$F^l\left(\theta=0\right)=1$ for all $l\geq 1$. If
$\sigma_{xx}^0< \sigma_{aa}^0$, $\lambda_1=1$, $\lambda_2=0$,
$\mu_1=1$, and $\mu_2=0$. Again, $F^l\left(\theta=0\right)=1$
for all $l\geq 1$. Therefore, the normalized response
requirement is satisfied.

\textbf{Positive response and consistent reduction requirements:}
For convenience, we treat the two requirements together. Let
\begin{align}
\phi=\sigma_{xx}+\sigma_{aa},\label{eq:phi}\\
\psi=\sigma_{xx}-\sigma_{aa}.\label{eq:psi}
\end{align}
Then,
\begin{equation}\label{eq:FreqRespB}
F^l=
\frac{\sqrt{\Delta}-\psi}{2\sqrt{\Delta}}
\left(\frac{\phi-\sqrt{\Delta}}{2}\right)^l +
\frac{\sqrt{\Delta}+\psi}{2\sqrt{\Delta}}
\left(\frac{\phi+\sqrt{\Delta}}{2}\right)^l.
\end{equation}

The positive response requirement is satisfied if
\begin{align*}
\psi\leq \sqrt{\Delta}\\
\phi\geq\sqrt{\Delta},
\end{align*}
and the consistent reduction requirement is satisfied if
\begin{align*}
\psi\leq \sqrt{\Delta}\\
\phi+\sqrt{\Delta}\leq 2.
\end{align*}
Thus, both conditions are satisfied if
\begin{equation}
\psi\leq \sqrt{\Delta}\label{eq:th2_req1}
\end{equation}
\begin{equation}
\sqrt{\Delta}\leq \phi\leq 2-\sqrt{\Delta}\label{eq:th2_req2}.
\end{equation}
Since $\Delta=\left(\psi\right)^2-d\sin^2\theta$,
(\ref{eq:th2_req1}) is satisfied for all $\theta$ if
\begin{equation}\label{eq:req1}
d\leq 0.
\end{equation}
With $\phi\leq 2-\sqrt{\Delta}$, we have
\begin{equation}
\Delta\leq\left(2-\phi\right)^2, \phi\leq 2.
\end{equation}
Now,
\begin{equation}\label{eq:ineq1}
(2-\phi)^2-\Delta=16t^2\left(\left(c-d\right)w-\left(b+d\right)\right)\left(w-1\right)\geq 0
\end{equation}
where $w=\cos\theta$. Since $-1\le w\le 1$, $w-1\leq 0$. Thus,
(\ref{eq:ineq1}) is satisfied if
\begin{equation}
\left(c-d\right)w-\left(b+d\right)\leq 0.
\end{equation}
The above inequality holds if and only if it holds at $w=1$ and
$w=-1$. Hence,
\begin{equation}
b-c\geq -2d\label{eq:req2}
\end{equation}
\begin{equation}
b+c\geq 0\label{eq:req3}.
\end{equation}
With $\sqrt{\Delta}\leq \phi$, we have
$\sigma_{xx}\sigma_{aa}\geq\sigma_{xa}\sigma_{ax}$ or
\begin{equation}\label{eq:PositiveReq3}
(1-2t+2tw)(1-2tb+2tcw)\geq -4dt^2(1-w^2).
\end{equation}
Define
\begin{align*}
\eta(t,w)=(1-2t+2tw)(1-2tb+2tcw)+4dt^2(1-w^2)\\
=4(1-w)\left(b+d-(c-d)w\right)t^2-2\left(1+b-(1+c)w\right)t+1.
\end{align*}
To show (\ref{eq:PositiveReq3}), we need to show $\eta(t,w)\geq
0$.

Before deriving sufficient conditions for $\eta(t,w)\geq 0$, we
find two necessary conditions for (\ref{eq:PositiveReq3}). We
then show that the two conditions together with
(\ref{eq:req1}), (\ref{eq:req2}), and (\ref{eq:req3}) form
sufficient conditions for $\eta(t,w)\geq 0$.

The right hand side of (\ref{eq:PositiveReq3})is non-negative
with(37) and $1-2t+2tw\geq 0$ for $0\leq t\leq 1/4$ and $-1\leq
w\leq 1$. Thus, it is necessary that $1-2tb+2tcw\geq 0$. By
setting $t=1/4$ and $w=1$, we have
\begin{equation}
b-c\leq 2\label{eq:req4}.
\end{equation}
By setting $t=1/4$ and $w=-1$, we have
\begin{equation}
b+c\leq 2\label{eq:req5}.
\end{equation}

When $(1-w)\left(b+d-(c-d)w\right)=0$, $\eta(t,w)\geq 0$
because
\begin{equation*}
\eta(t,1)=1-2(b-c)t\geq 1-(b-c)/2\geq 0
\end{equation*}
(using $t\leq 1/4$ and $b-c\leq 2$) and
\begin{equation*}
\eta(t,-1)=1-2\left(2+(b+c)\right)t=1-4t\geq 0
\end{equation*}
(using $b+c=0$ derived from $b+d-(c-d)(-1)=0$ and $t\leq 1/4$),
and $\eta(t,w)$ in this case is a linear function of $w$.

With $(1-w)\left(b+d-(c-d)w\right)\ne 0$, we have
\begin{align*}
\eta(t,w)=4(1-w)\left(b+d-(c-d)w\right)\left(t-\frac{\left(1+b-(1+c)w\right)}{4(1-w)\left(b+d-(c-d)w\right)}\right)^2\\
-\frac{\left(1+b-(1+c)w\right)^2}{4(1-w)\left(b+d-(c-d)w\right)}+1.
\end{align*}
This is convex ($4(1-w)\left(b+d-(c-d)w\right)\geq 0$) parabola
of $t$ with its center located at the positive side
($1+b-(1+c)w\geq 0$). A necessary and sufficient condition for
the above inequality is
\begin{equation}
\eta(0,w)\geq 0,
\end{equation}
\begin{equation}
\eta(1/4,w)\geq 0,
\end{equation}
and
\begin{equation}\label{eq:misc1}
1-\frac{\left(1+b-(1+c)w\right)^2}{4(1-w)\left(b+d-(c-d)w\right)}\geq 0
\end{equation}
provided that
\begin{equation}\label{eq:misc2a}
\frac{1+b-(1+c)w}{4(1-w)\left(b+d-(c-d)w\right)}\leq 1/4
\end{equation}
or
\begin{equation}\label{eq:misc2}
0\leq 1+b-(1+c)w\leq (1-w)\left(b+d-(c-d)w\right).
\end{equation}
The first condition is trivially satisfied. The second
condition is satisfied since
\begin{equation}
\eta(1/4,w)=\frac{1}{4}(1-w)\left(2-(b+d)+(c+d)w\right)
\end{equation}
and $1-w\geq 0$ and $2-(b+d)+(c+d)w\geq 0$. (Note that at
$w=1$, we have $2-(b-c)\geq 0$ because $b-c\leq 2$, and at
$w=-1$, we have $2-2d-(b+c)\geq 0$ because $b+c\leq 2\leq
2-2d$.)

For the third condition, (\ref{eq:misc1}) with (\ref{eq:misc2})
gives
\begin{equation}
1-\frac{\left(1+b-(1+c)w\right)^2}{4(1-w)\left(b+d-(c-d)w\right)}\geq 1-\frac{1}{4}(1-w)\left(b+d-(c-d)w\right)
\end{equation}
Thus, the third condition is satisfied if the right hand side
of the above is non-negative. In other words, we need to show
\begin{equation}
\zeta(w)=4-(1-w)\left(b+d-(c-d)w\right)=(c-d)w^2+(b-1)w+(1-d)
\end{equation}
is non-negative. Indeed, $\zeta(-1)=2-2d-(b-c)\geq 0$ and
$\zeta(1)=b+c-2d\geq 0$. Therefore, if $c-d\leq 0$, then
$\zeta(w)\geq 0$ in $-1\leq w\leq 1$. If $c-d> 0$, then
\begin{equation}
\zeta(w)=(c-d)\left(w+\frac{b+c}{2(c-d)}\right)^2-\frac{(b+c)^2}{4(c-d)}+4-(b+d)
\end{equation}
and we need to show that
\begin{equation}
-\frac{(b+c)^2}{4(c-d)}+4-(b+d)\geq 0
\end{equation}
provided $(b+c)/(c-d)<2$. This is indeed the case, since
\begin{equation}
-\frac{(b+c)^2}{4(c-d)}+4-(b+d)\geq 4-(c-d)-(b+d)=4-(b+c)\geq 2.
\end{equation}

\textbf{Unimodality Requirement:}
Note that
\begin{equation}
\frac{\partial F^l(\theta)}{\partial\theta}=-\xi\left(\cos\theta, t\right) \sin\theta,
\end{equation}
where
\begin{align*}
\xi(w,t)=\left(\left(\frac{\phi+\sqrt\Delta}{2}\right)^l+\left(\frac{\phi-\sqrt\Delta}{2}\right)^l\right)
\left(\frac{2\Delta\xi_\psi-\psi\xi_\Delta}{4\Delta\sqrt\Delta}\right)+\notag\\
l\left(\frac{-\psi}{2\sqrt\Delta}+\frac{1}{2}\right)\left(\frac{\phi-\sqrt\Delta}{2}\right)^{l-1}
\left(\frac{2\sqrt\Delta\xi_\phi-\xi_\Delta}{4\sqrt\Delta}\right)+\notag\\
l\left(\frac{-\psi}{2\sqrt\Delta}+\frac{1}{2}\right)\left(\frac{\phi+\sqrt\Delta}{2}\right)^{l-1}
\left(\frac{2\sqrt\Delta\xi_\phi+\xi_\Delta}{4\sqrt\Delta}\right),
\end{align*}
with
\begin{equation}
\xi_\phi(t)=2t(1+c),
\end{equation}
\begin{equation}
\xi_\psi(t)=2t(1-c),
\end{equation}
\begin{equation}
\xi_\Delta(w,t)=8t^2\left((1-c)\left(-(1-b)+(1-c)w\right)+4dw\right).
\end{equation}
Note that
$\partial\phi(\theta)/\theta=-\xi_\phi(t)\sin(\theta)$,
$\partial\psi(\theta)/\theta=-\xi_\psi(t)\sin(\theta)$, and
$\partial\Delta(\theta)/\theta=\xi_\Delta(t)\sin(\theta)$.

The unimodality requirement is satisfied if $\xi(w,t)\geq 0$ in
$-1\leq w\leq 1$ and $0\leq t\leq 1/4$. Observe that
$\xi(w,t)\geq 0$ if
\begin{align}
2\Delta\xi_\psi-\psi\xi_\Delta\geq 0\\
2\sqrt\Delta\xi_\phi-\xi_\Delta\geq 0\label{eq:misc3}\\
2\sqrt\Delta\xi_\phi+\xi_\Delta\geq 0\label{eq:misc4},
\end{align}
provided that the positivity requirement is satisfied.

Now,
\begin{equation}
2\Delta\xi_\psi-\psi\xi_\Delta =-64dt^3\left((1-c)+(1-b)w\right)
\end{equation}
is a linear function of $w$. At $w=1$,
\begin{equation}
2\Delta\xi_\psi-\psi\xi_\Delta =-64dt^3\left(2-(b+c)\right)\geq 0
\end{equation}
by (\ref{eq:req1}) and (\ref{eq:req5}). At $w=-1$,
\begin{equation}
2\Delta\xi_\psi-\psi\xi_\Delta =-64dt^3\left(b-c\right)\geq 0
\end{equation}
by (\ref{eq:req1}) and (\ref{eq:req2}). Hence
$2\Delta\xi_\psi-\psi\xi_\Delta\geq 0$.

(\ref{eq:misc3}) and (\ref{eq:misc4}) can be combined into
\begin{align}
\xi_\phi\geq 0\label{eq:misc5}\\
4\Delta\xi_\phi^2-\xi_\Delta^2\geq 0\label{eq:misc6}.
\end{align}
(\ref{eq:misc5}) is satisfied if and only if
\begin{equation}
\label{eq:req6}
c\geq -1.
\end{equation}

Let
\begin{equation}
\vartheta(w)=4\Delta\xi_\phi^2-\xi_\Delta^2.
\end{equation}
We first evaluate necessary conditions $\vartheta(1)\geq 0$ and
$\vartheta(-1)\geq 0$, then show that the necessary conditions
are also sufficient for (\ref{eq:misc6}).

\begin{equation}
\vartheta(1)=4(b-c+2d)\left(bc-2d-c^2\right)\geq 0.
\end{equation}
Since $b-c+2d\geq 0$ according to (\ref{eq:req2}),
$bc-2d-c^2\geq 0$. Hence, we have
\begin{equation}
\label{eq:req7}
bc-2d-c^2\geq 0.
\end{equation}
\begin{equation}
\vartheta(-1)=4(b+c-2d-2)\left(bc+2d+c^2-2c\right)\geq 0
\end{equation}
Thus, either $b+c-2d-2\geq 0$ and $bc+2d+c^2-2c\geq 0$ or
$b+c-2d-2\leq 0$ and $bc+2d+c^2-2c\leq 0$. However,
\begin{equation}
\left(b+c-2d-2\right)+\left(bc+2d+c^2-2d\right)=\left(b+c-2\right)\left(c+1\right)\leq 0
\end{equation}
according to (\ref{eq:req5}) and (\ref{eq:req6}). Therefore, we
have
\begin{equation}
\label{eq:req8}
b+c-2d-2\leq 0
\end{equation}
\begin{equation}
\label{eq:req9}
bc+2d+c^2-2c\leq 0.
\end{equation}

Now, we show that these conditions are sufficient for
$\vartheta(w)\geq 0$ in $-1\leq w\leq 1$. When
$\left(1-c\right)^2+4d=0$, $\vartheta(w)$ reduces to a linear
expression of $w$, and $\vartheta(1)\geq 0$ and
$\vartheta(-1)\geq 0$ are sufficient for $\vartheta(w)\geq 0$.
When $\left(1-c\right)^2+4d\ne 0$,
\begin{equation}
\vartheta(w)=\alpha\left(w-w_c\right)^2+\beta
\end{equation}
where
\begin{equation}
\alpha=c(2+c)\left(\left(1-c\right)^2+4d\right),
\end{equation}
\begin{equation}
w_c=\frac{(1-b)(1-c)}{\left(1-c\right)^2+4d},
\end{equation}
\begin{equation}
\beta=4d(1+c)^2\left(\frac{(1-b)^2}{(1-c)^2+4d}-1\right).
\end{equation}
\end{proof}
When $(1-c)^2+4d<0$, then $\vartheta(w)$ is concave, and
$\vartheta(1)\geq 0$ and $\vartheta(-1)\geq 0$ are sufficient
for $\vartheta(w)\geq 0$. When $(1-c)^2+4d>0$, then
$\vartheta(w)$ is convex and we consider three cases dependent
on $w_c$: $w_c\leq -1$, $w_c\geq 1$, and $-1<w_c<1$. When
$w_c\leq -1$, $\vartheta(-1)\geq 0$ is sufficient for
$\vartheta(w)\geq 0$. When $w_c\geq 1$, $\vartheta(1)\geq 0$ is
sufficient for $\vartheta(w)\geq 0$. For $-1\leq w_c\leq 1$, we
need to show $\vartheta(w_c)=\beta\geq 0$ or equivalently
\begin{equation}
\frac{(1-b)^2}{(1-c)^2+4d}\leq 1
\end{equation}
because $d\leq 0$.

Since $w_c^2<1$ and $(1-c)^2>-4d\geq 0$, we have
\begin{equation}
(1-b)^2<\frac{\left((1-c)^2+4d\right)^2}{\left(1-c\right)^2}.
\end{equation}
Thus,
\begin{equation}
\frac{(1-b)^2}{(1-c)^2+4d}<\frac{(1-c)^2+4d}{\left(1-c\right)^2}\leq 1
\end{equation}
where we used $d\leq 0$ for the second inequality. Hence
$\vartheta(w_c)\geq 0$.

Therefore, we have the following requirements.
\begin{align}
d\leq 0\label{eq:tobederived5}\\
0\leq b+c\leq 2+2d\label{eq:tobederived1}\\
-2d\leq b-c \leq 2\label{eq:tobederived6}\\
bc-c^2-2d\geq 0\label{eq:tobederived4}\\
bc+2d+c^2-2c\leq 0\label{eq:tobederived3}
\end{align}
$\Box$

Note that
\begin{align}
-d\leq b\leq 2+d,\label{eq:intervalB}\\
-1\leq c\leq 1+2d\label{eq:intervalC}.
\end{align}

\section{Numerical Experiments}
\label{experiments}
When $d=0$, $\sigma_{xa}\sigma_{ax}=0$. In
this case, according to (\ref{eq:FreqResp}),
\begin{equation}
F^l=\sigma_{xx}^l,
\end{equation}
and the filter reduces to the linear diffusion type.
Historically, the resulting scale space is called
\textit{Gaussian}. In this section, we compute frequency
responses of filters at various settings, and compare them to
the Gaussian scale space.

Let's first look at two instances of the linear diffusion type,
which can illustrate how the mixing coefficients ($\mu_1$ and
$\mu_2=1-\mu_1$) and two eigenfunctions ($\lambda_1(\theta)$
and $\lambda_2(\theta)$) contribute to the overall frequency
response $F^l(\theta)$.

The frequency responses of a matrix of filters with $t=1/4$,
$b=c=1$ and $d=0$ at $l=1$, 50, 100, and 150 are shown in
Figure \ref{fig:NonGaussianFilter1}. Note that the scale
parameter $t$ contributes to the speed of the smoothing and
does not change the scale space. In the figure, (a) shows
$F^l$, the magnitude of the frequency responses, (b) shows
$\mu_2$, a mixing coefficient, (c) shows $\lambda_1^l$, and (d)
shows $\lambda_2^l$. This is a special case where
$\Delta(\theta)=0$ since
$\sigma_{xx}(\theta)=\sigma_{aa}(\theta)$. Thus,
$\lambda_1(\theta)=\lambda_2(\theta)=(1-2t)+2t\cos\theta$. With
two eigenfunctions being equal, the mixing coefficients are
arbitrary. As stated in Section \ref{formulation}, we set them
to $\mu_1=\mu_2=1/2$. However, regardless of the choice of the
mixing coefficients,
$F^l(\theta)=\left(\mu_1(\theta)+\mu_2(\theta)\right)
\sigma_{xx}^l(\theta)= \sigma_{xx}^l(\theta) =
\left((1-2t)+2t\cos\theta\right)^l$.

The frequency responses with $t=1/4$, $b=1$, $c=1/2$, and $d=0$
are shown in Figure \ref{fig:GaussianFilter2} with the same
arrangement with Figure \ref{fig:GaussianFilter1}. In this
case, $\Delta(\theta)\neq 0$, and the mixing coefficients are
uniquely determined. They are either 0 or 1, and switch the
value at the point where the sign of $\sigma_{xx}-\sigma_{aa}$
changes. Since $\sigma_{aa}=1-2t+t\cos\theta$ with $b=1$ and
$c=1/2$, the switch occurs at $\theta=\pi/2$. When
$\mu_k(\theta)=1$, $\lambda_k=\sigma_{xx}$, and when
$\mu_k(\theta)=0$, $\lambda_k=\sigma_{aa}$. Note that
$\lambda_k(\theta)$ does not contribute to $F^l(\theta)$ when
$\mu_k(\theta)=0$. Therefore, $F^l=\sigma_{xx}^l$, and is not
dependent on $\sigma_{aa}$.

With $d$ strictly negative, $b$ has to be positive according to
(\ref{eq:intervalB}). We are allowed to set $c=0$ according to
(\ref{eq:intervalC}), which we will do since the setting
results in a simpler form of the filter (a smaller number of
non-zero coeffcients). Then, the requirements given in Theorem
\ref{theorem2} reduces to
\begin{align}
d\leq 0,\\
-2d\leq b\leq 2+2d.
\end{align}
The smallest allowable $d$ is thus $-0.5$, which makes $b=1.0$.
We choose the smallest $d$ so that the resulting filter may
exhibit behavior that is more distinguishable from the linear
diffusion case than the one with $d\approx 0$.

The frequency responses of the above matrix of filters
($t=1/4$, $b=1$, $c=0$, and $d=-0.5$) at $l=1$, 50, 100, and
150 are observed and shown in Figure
\ref{fig:NonGaussianFilter1}. The arrangement of the plots in
the figure is the same as in Figure \ref{fig:GaussianFilter1}.
With $d\neq 0$, the mixing coefficients are no longer binary
and $\mu_2$ decreases from 1 to 0 as $\theta$ goes from 0 to
$\pi$, almost in a linear fashion. More specifically, with this
setting, we have
\begin{align}
\phi(\theta)=1+\frac{\cos\theta}{2}\\
\sqrt{\Delta}=\frac{\sqrt{6-2\cos2\theta}}{4}.
\end{align}
Note that
$\phi=\sigma_{xx}+\sigma_{aa}=(2-2((1+b)t)+2t(1+c)\cos\theta$.
Thus, $\phi$ (as well as $\psi=\sigma_{xx}-\sigma_{aa}$) is
always a $\cos\theta$ with scaling and offsetting. Since the
eigenfunctions are $\frac{\phi}{2}\pm \frac{\sqrt{\Delta}}{2}$,
$\sqrt{\Delta}$ is responsible for any characteristics of
$\lambda$s deviating from $\cos\theta$. Note that $\Delta$
consists of $\cos 2\theta$, which is a key in curving out a
frequency response profile that is different from the linear
diffusion case.

In Figure \ref{fig:EigenCloseup}, the eigenfunctions and their
constituents are shown. $\lambda_1$ and $\lambda_2$ are shown
with solid lines, $\phi/2$ is shown with a dotted line, and
$\sqrt{\Delta}/2$ is shown with dashed-dotted line.
$\sqrt{\Delta}/2$ provides deviation of $\lambda$s from
$\phi/2$, which is a scaled copy of $\cos\theta$ . To show the
degree of the deviation and how $\sqrt{\Delta}$ tunes
$\lambda$s, $\phi/2\pm 1/4$ are shown in Figure
\ref{fig:EigenCloseup} with dashed lines. These curves coincide
with $\lambda$s at $\theta=0$ and $\pi$ but deviate from them
elsewhere. By the comparisons, the slope of $\lambda_2$ (which
is above $\lambda_1$) is smaller around $\theta=0$ and larger
around $\theta=\pi$ than $\cos\theta$. On the other hand, the
slope of $\lambda_1$ is larger around $\theta=0$ and smaller
around $\theta=\pi$. By weighting more on $\lambda_2$ near
$\theta=0$ and more on $\lambda_1$ near $\theta=\pi$, we might
be able to bring a frequency response with a sharper cut-off
than $\cos\theta$, the Gaussian scale space case. However,
since $\mu_1\lambda_1+\mu_2\lambda_2=\sigma_{xx}$, this does
not happen at the first iteration. However, as the iteration
($l$) increases, $\lambda_2^l$ tends to hold its frequency
profile near $\theta=0$ better than the Gaussian counterpart,
thanks to its flatter profile around $\theta=0$. As a result,
the bandwidth of $F^l=\mu_1\lambda_1^l+\mu_2\lambda_2^l$ does
not diminish as quickly as that of $\sigma_{xx}^l$, which we
observe in Figure \ref{fig:NonGaussianFilter1}.

To implement the above filter, we can set the temporal filters
as
\begin{align}
\textbf{f}_{xx}=[t, ~1-2t, ~t]z^{-1}\\
\textbf{f}_{aa}=[1/2]\\
\textbf{f}_{ax}=[\sqrt{1/2}, ~0, ~-\sqrt{1/2}]z^{-1}\\
\textbf{f}_{xa}=[-\sqrt{1/2}, ~0, ~\sqrt{1/2}]z^{-1}.
\end{align}
Note that the requirements of $\textbf{f}_{ax}$ and
$\textbf{f}_{xa}$ are $\alpha_{ax}\alpha_{xa}=d=-1/2$,
$\beta_{ax}=\beta_{xa}=0$, $\gamma_{ax}=-\alpha_{ax}$, and
$\gamma_{xa}=-\alpha_{xa}$. Thus, the above setting is the most
balanced one, but just one of infinitely many. One may want to
set instead
\begin{align}
\textbf{f}_{ax}=[1, ~0, ~-1]z^{-1}\\
\textbf{f}_{xa}=[-1/2, ~0, ~1/2]z^{-1}.
\end{align}
so that they can be implemented more efficiently without
multipliers.


We claimed that the bandwidth of $F^l$ decreases more slowly
for the instance shown in Figure \ref{fig:NonGaussianFilter1}
than for the Gaussian case shown in Figure
\ref{fig:GaussianFilter1}. To reveal that scale spaces resulted
from these two filters are indeed different and not an
superficial one due simply to the speed of the smoothing, the
frequency responses of two filters from each respective
configurations are shown in Figure \ref{fig:FilterComparison}.
The response with $b=1$, $c=d=0$ at 5th iteration is shown in
dashed while the response with $b=1$, $c=0$ and $d=-0.5$ at
35th iteration is shown in solid. Note that due to the constant
reduction requirement of scale spaces, frequency responses of a
linear diffusion filter at two iteration points cannot
intersect each other. Thus, the filter response shown in solid
cannot be produced by the linear diffusion kernel, and the
resulting scale space is different from the Gaussian one.

\begin{figure}
\centering
\includegraphics[width=4in]{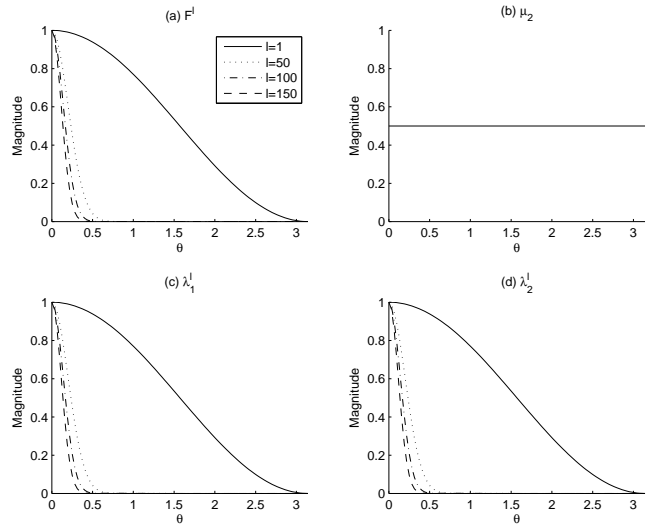}
\caption{Frequency responses of a (Gaussian) matrix of filters with $b=1$, $c=1$, and $d=0$.}
\label{fig:GaussianFilter1}
\end{figure}
\begin{figure}
\centering
\includegraphics[width=4in]{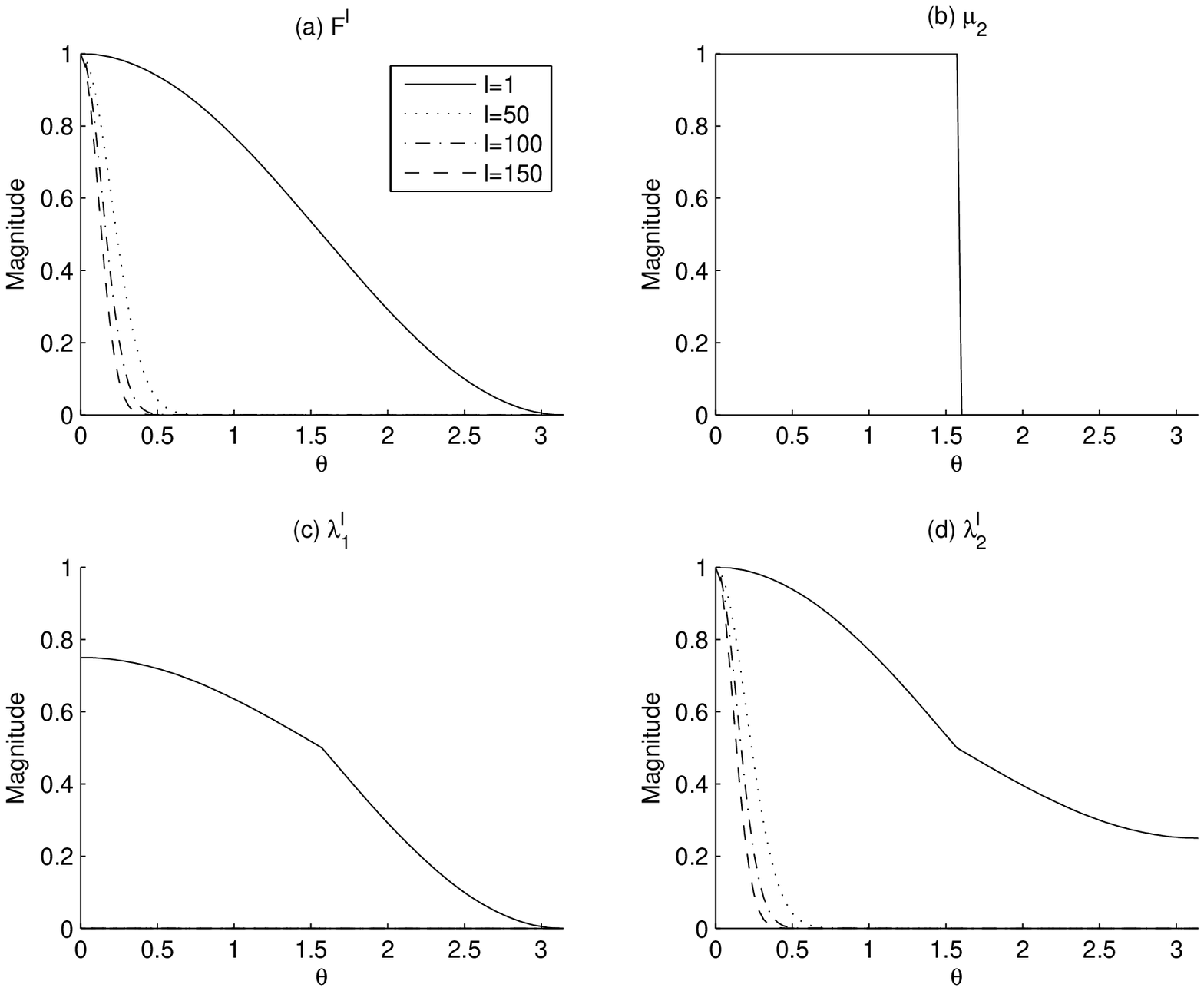}
\caption{Frequency responses of a (Gaussian) matrix of filters with $b=1$, $c=1/2$, and $d=0$.}
\label{fig:GaussianFilter2}
\end{figure}
\begin{figure}
\centering
\includegraphics[width=4in]{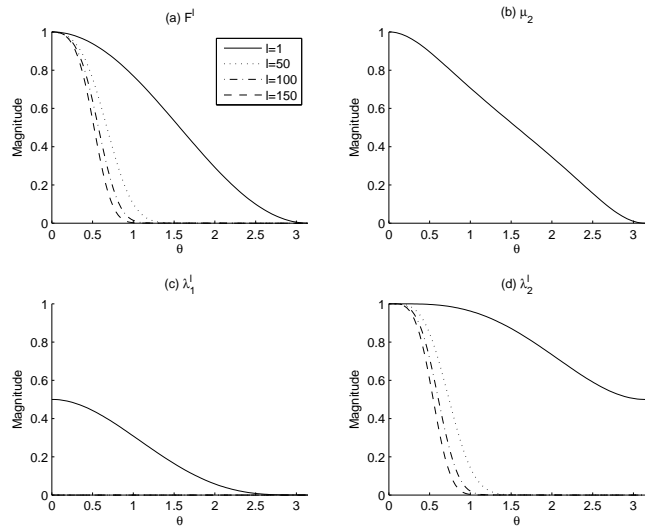}
\caption{Frequency responses of a matrix of filters with $b=1$, $c=0$, and $d=-0.5$.}
\label{fig:NonGaussianFilter1}
\end{figure}
\begin{figure}
\centering
\includegraphics[width=4in]{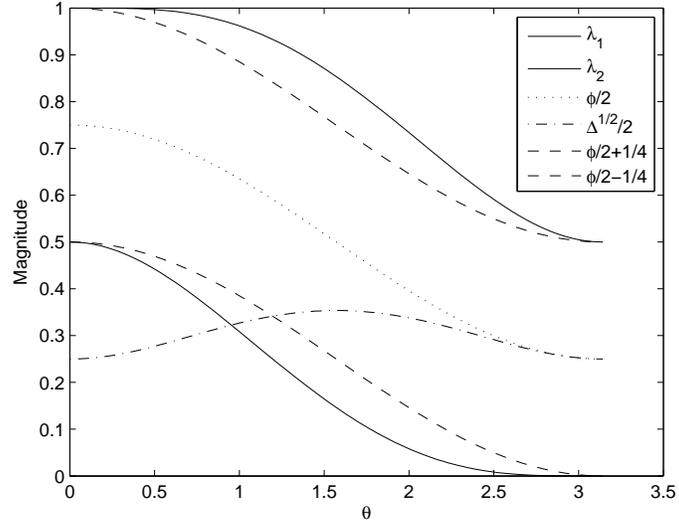}
\caption{Close-up of eigenfunctions and their constituents with $b=1$, $c=0$, and $d=-0.5$.}
\label{fig:EigenCloseup}
\end{figure}
\begin{figure}
\centering
\includegraphics[width=3in]{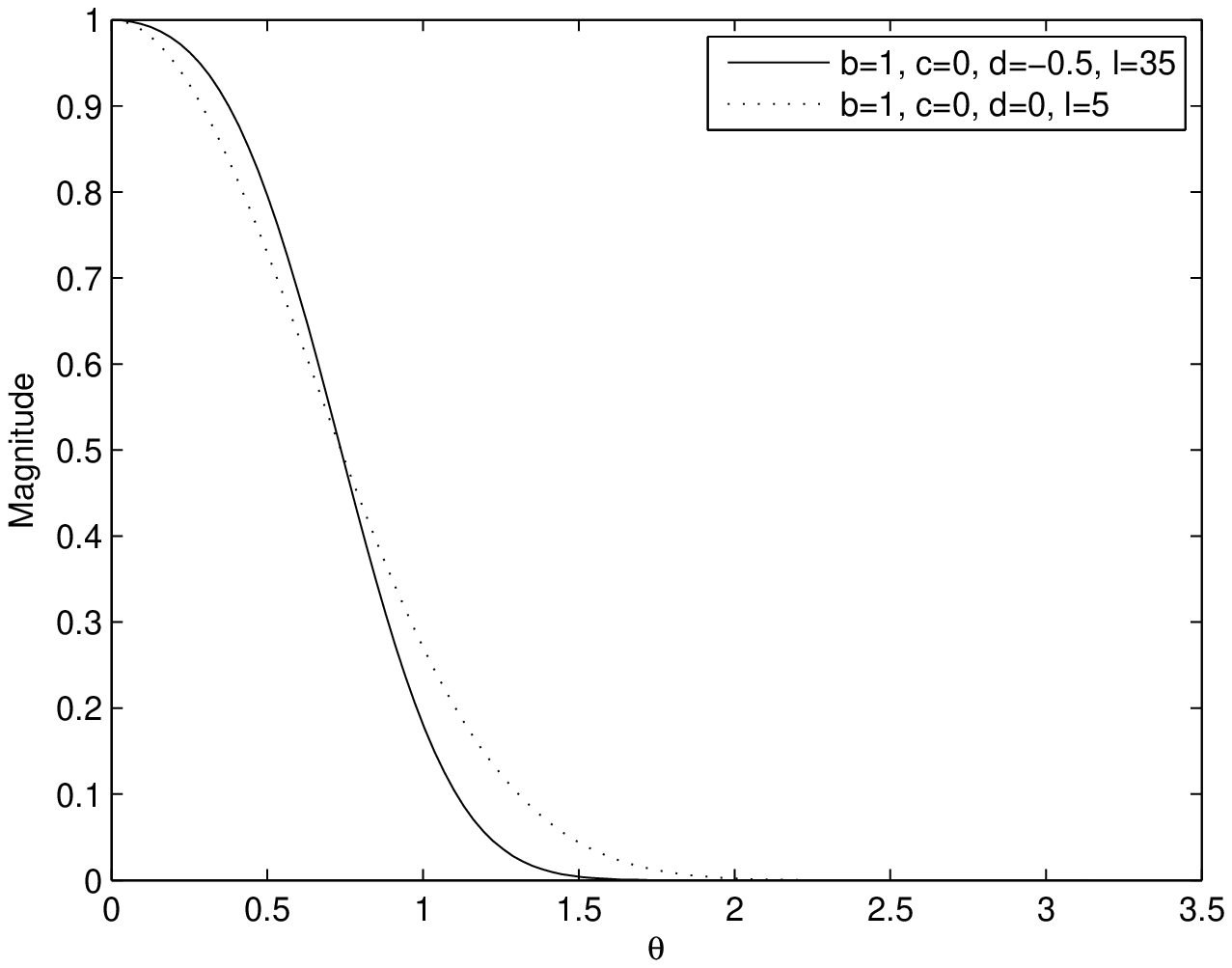}
\caption{Comparison of two filters: Gaussian ($b=1$, $c=d=0$) and Non-Gaussian ($b=1$, $c=0$, $d=-0.5$).}
\label{fig:FilterComparison}
\end{figure}

Another design philosophy to consider is to create a sharp
cut-off in the frequency response profile. Under such approach,
we may want to find a set of parameters that maximizes
$F^l(\pi/16)-F^l(\pi/4)$ at say $l=100$, while satisfying the
constraints of Theorem \ref{theorem2}. This is a non-convex
optimization problem and can be solved numerically with various
software packages.

By a Matlab$\circledR$ Optimization toolbox, we obtained $b=1$,
$c=0.48$, and $d=-0.26$. The result was not sensitive to
initial conditions, which were set randomly. The frequency
response of the filter at $l=100$ is shown in Figure
\ref{fig:SharpCutoffDesign} along with the response of the
Gaussian counterpart. The amount of fall-off for the
non-Gaussian case is 0.93 while that of the Gaussian case is
0.41 and that of a non-Gaussian case with $b=1$, $c=0$ and
$d=-0.5$ (the filter shown in Figure
\ref{fig:NonGaussianFilter1}) is 0.83.

\begin{figure}
\centering
\includegraphics[width=3in]{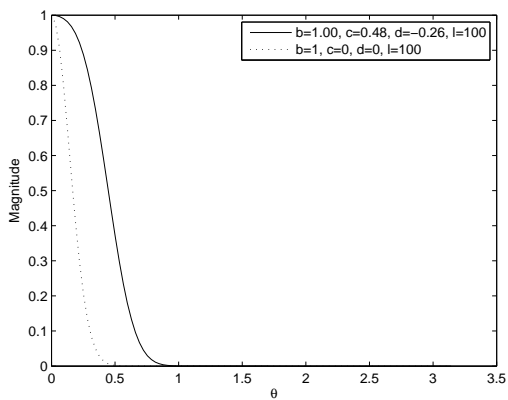}
\caption{Comparison of two filters at $l=100$: Gaussian ($b=1$,
$c=d=0$) and Non-Gaussian ($b=1$, $c=0.48$, $d=-0.26$). The
parameters for the non-Gaussian case are obtained by constrained
non-linear optimization to maximize the fall off between
$\theta=\pi/16$ and $\theta=\pi/4$ (in other words,
$F^l(\pi/16)-F^l(\pi/4)$.). The fall-off for the Gaussian case
is 0.93, and that for the Gaussian case is 0.41.}
\label{fig:SharpCutoffDesign}
\end{figure}

\section{Discussion}
One question that naturally arise from the illustration given
in Figure \ref{fig:EigenCloseup} regarding the frequency
profiles of the constituents is if it is possible to generate
the same frequency response of the matrix of filters by using a
conventional convolution kernel with a larger support (more
than 3 non-zero coefficients). For example, we can use an
equivalent filter at some $l$ as the convolution kernel. The
answer to the question is no, since the formulae of the
frequency responses for the two cases are different; For a
conventional convolution based case,
\begin{equation}\label{eq:freqConv}
F^l(\theta)=\left(K(\theta)\right)^l,
\end{equation}
where $K$ is the frequency response of the convolution kernel.
On the other hand, the frequency response of a matrix of
filters is given by
\begin{equation}\label{eq:freqMF}
F^l(\theta)=\mu_1(\theta)\left(\lambda_1(\theta)\right)^l +
\mu_2(\theta)\left(\lambda_2(\theta)\right)^l.
\end{equation}
Thus, (\ref{eq:freqConv}) cannot generate (\ref{eq:freqMF}) in
general. As seen in Section \ref{experiments}, the converse is
not true, and (\ref{eq:freqMF}) can generate any instance of
(\ref{eq:freqConv}) by setting $\sigma_{xx}(\theta)=K(\theta)$
and $d=0$.

In this paper, we have limited our study to the $2\times 2$
case. Even with such minimal configuration, the resulting
equivalent filter is able to construct a non-trivial (or
non-Gaussian) scale space. With larger configuration, we expect
that more elaborate frequency responses are possible. Note that
the original EQW employed a $3\times 3$ matrix of filters. It
is however, difficult to extend the analysis described in this
paper to the general case.  Closed form expressions of
eigenvalues are not possible for $P>4$, and although they exist
for $P\le 4$, deriving sufficient conditions for scale space
filtering can be extremely complicated.

We can extend the matrix size while imposing some structural
constraints on the matrix. For example, we can consider a
$P\times P$ matrix of filters that are circulant. Then, we will
be able to derive a simple expression for the frequency
response of the equivalent filters. In this case, the mixing
coefficients are all $1/P$, thus the frequency response of the
equivalent filter becomes
\begin{equation}
F^l(\theta)=\frac{1}{P}\sum_{j=0}^P \lambda_j^l(\theta)
\end{equation}
with
\begin{equation}
\lambda_j(\theta)=\sum_{k=0}^P F_{1k}(\theta)\rho_P^{jk}
\end{equation}
where $F_{1k}$ is the frequency response of $\mathbf{f}_{1k}$
and $\rho_P$ is the $P$th root of unity. Note that at $P=2$,
this circulant configuration leads to $\sigma_{xx}=\sigma_{aa}$
and in turn leads a Gaussian scale space. It is not clear if
the same can be said for $P>2$.

Without closed form expressions of eigenvalues, we resort to
numerical schemes. Give a matrix of filter, we want to test if
the filter satisfies the scale space requirements. We need to
come up with numerical conditions that guarantee the positivity
and unimodality requirements at every $\theta$ and the constant
reduction requirement at every $l$.

So far, we assumed that each convolution filter is circulant.
We can extend the results for non-circulant filter with some
type of extension schemes such as zero padding and reflection,
given an upper limit of the iteration number. Let $L$ be the
upper limit of the iteration number. Then, the length of the
equivalent filter is at most $2L+1$. Then the result of the
iterative filtering can be obtained by first extending the
original signal by $L$ on both ends by the chosen extension
scheme, apply the circulant filters to the extended signal, and
truncate the result at the portion of the original signal.
Thus, non-circulant filter can be implemented by circulant one
with proper extension. Given a scale space of a signal (i.e. a
collection of signals that satisfy the scale space
requirements), a truncated portion of the signal also satisfies
the scale space requirements. Thus, the sufficient condition
for the scale space filter remains applicable to the
non-circulant case.

\section{Conclusion}
In this paper, we first derived the frequency response of a
general matrix of filters applied iteratively to the signal.
The response is a convex combination of the power of
eigen-functions describing the impulse response of the filter.
We then studied a $2\times 2$ matrix of filters and derive
sufficient conditions for it to be a scale space kernel. We
showed that the $2\times 2$ matrix of filters can generate
non-Gaussian scale space, thus are more powerful than the
conventional convolution kernel.

Future research goals include extension of the study to more
general matrix sizes. We suggest investigating some general
cases such as circulant one and tri-diagonal one, and derive
sufficient conditions for the scale space requirements. For
more general cases, we suggest deriving a numerical test that
checks if the given configuration satisfies the scale space
requirements.


\begin{thebibliography}{1}
\providecommand{\url}[1]{#1}
\def\UrlFont{\rmfamily}
\providecommand{\newblock}{\relax}
\providecommand{\bibinfo}[2]{#2}
\providecommand\BIBentrySTDinterwordspacing{\spaceskip=0pt\relax}
\providecommand\BIBentryALTinterwordstretchfactor{4}
\providecommand\BIBentryALTinterwordspacing{\spaceskip=\fontdimen2\font plus
\BIBentryALTinterwordstretchfactor\fontdimen3\font minus
  \fontdimen4\font\relax}
\providecommand\BIBforeignlanguage[2]{{%
\expandafter\ifx\csname l@#1\endcsname\relax
\typeout{** WARNING: IEEEtran.bst: No hyphenation pattern has been}%
\typeout{** loaded for the language `#1'. Using the pattern for}%
\typeout{** the default language instead.}%
\else
\language=\csname l@#1\endcsname
\fi
#2}}

\bibitem{Kubota:IJCV2009}
T.~Kubota, ``A shape representation with elastic quadratic
  polynomials--preservation of high curvature points under noisy conditions,''
  \emph{International Journal Computer Vision}, vol.~82, no.~2, pp. 133--155,
  2009.

\bibitem{Witkin:ScaleSpaceA}
A.~Witkin, ``Scale-space filtering,'' in \emph{Proc. 8th Int'l Joint Conf.
  Artificial Intelligence}, 1983, pp. 1019--1022.

\bibitem{Lindeberg:PAMI90}
T.~Lindeberg, ``Scale-space for discrete signals,'' \emph{IEEE Trans. Pattern
  Analysis and Machine Intelligence}, vol.~12, no.~3, pp. 234--254, March 1990.

\bibitem{Koenderink:ScaleSpace}
J.~J. Koenderink, ``The structure of images,'' \emph{Biological Cybernetics},
  vol.~50, pp. 363--370, 1984.

\bibitem{Babaud:ScaleSpace}
J.~Babaud, A.~P. Witkin, M.~Baudin, and R.~O. Duda, ``Uniqueness of the
  gaussian kernel for scale space filtering,'' \emph{IEEE Trans. Pattern
  Analysis and Machine Intelligence}, vol.~8, no.~1, pp. 26--33, January 1986.

\bibitem{Yuille:ScaleSpace}
A.~L. Yuiile and T.~A. Poggio, ``Scaling theorem for zero crossings,''
  \emph{IEEE Trans. Pattern Analysis and Machine Intelligence}, vol.~8, no.~1,
  pp. 15--25, January 1986.

\end{thebibliography}

\end{document}